\begin{document}
%
\title{Anytime Exact Belief Propagation}
\author{Gabriel Azevedo Ferreira, Quentin Bertrand, Charles Maussion, Rodrigo de Salvo Braz\\
Artificial Intelligence Center\\
SRI International\\
Menlo Park CA, USA \\
}
\maketitle
\begin{abstract}

Statistical Relational Models and, more recently, Probabilistic Programming,
have been making strides towards an integration of logic and probabilistic reasoning.
A natural expectation for this project is that a probabilistic logic reasoning algorithm reduces to a logic reasoning algorithm when
provided a model that only involves 0-1 probabilities,
exhibiting all the advantages of logic reasoning such
as short-circuiting, intelligibility, and the ability to provide proof trees for a query answer.
In fact, we can take this further and require that these
characteristics be present even for probabilistic models
with probabilities \emph{near} 0 and 1, with graceful degradation
as the model becomes more uncertain.
We also seek inference that has amortized constant time complexity
on a model's size (even if still exponential in the induced width of a more directly relevant portion of it) so that it can be applied to huge knowledge bases
of which only a relatively small portion is relevant to typical queries.
We believe that, among the probabilistic reasoning algorithms, Belief Propagation is the most similar to logic reasoning:
messages are propagated among neighboring variables,
and the paths of message-passing 
are similar to proof trees. However, Belief Propagation
is either only applicable to tree models, or approximate
(and without guarantees) for precision and convergence.
In this paper we present work in progress on an Anytime Exact Belief Propagation algorithm
that is very similar to Belief Propagation but is exact
even for graphical models with cycles,
while exhibiting soft short-circuiting, amortized constant time complexity in the model size, and which can provide probabilistic proof trees.

\end{abstract}

\section{Introduction}

Statistical Relational Models \cite{getoor07introduction} and, more recently, Probabilistic Programming,
have been making strides towards \emph{probabilistic logic inference algorithms} that integrate logic and probabilistic reasoning.
These algorithms perform inference
on \emph{probabilistic logical models},
which generalize regular logical models by containing formulas that have a \emph{probability} of being true, rather than \emph{always} being true.

While Statistical Relational Models and Probabilistic Programming focus on higher-level representations such as relations and data structures,
even regular graphical models such as Bayesian networks can
be thought of as probabilistic logic models,
since conditional probability distributions (or factors,
for undirected models) can be described by formulas
and therefore thought of as probabilistic formulas.
This paper focuses on regular graphical models
that will later serve as a basis for higher-level probabilistic logic
models (as further discussed in the conclusion).

Naturally, probabilistic inference algorithms must be able to perform inference
on purely logic models, since these can be seen as 
probabilistic logic models whose formulas have probability 1
of being true.
In this case, it is desirable that the probabilistic inference algorithms  reduce to logic reasoning
in a way that exploits the logic structure of the model nearly as efficiently
as pure logic reasoning algorithms would.
In fact, we should expect even more: if a model (or part of it)
is \emph{near-certain} (with probabilities close to 0 and 1),
then the model is very close to a purely logical model
and it is reasonable to expect a probabilistic inference algorithm to
exploit the logical structure to some degree,
with graceful degradation as the model becomes more uncertain.

\emph{Short-circuiting} is a type of structure that is an important source of efficiency for logic reasoning.
A formula is short-circuited if its value can be determined
from the value of only some of its sub-formulas.
For example, if a model
contains the formula $A \Leftarrow B \vee C \vee D$ and $B$ happens to be true, then a logic reasoning algorithm can conclude that $A$ is true without having to decide whether $C$ and $D$ are true.
However, if a probabilistic reasoning algorithm knows $P(A | B \vee C \vee D) = 1$ and that $P(B) = 0.9$,
it will typically still need to compute $P(C)$ and $P(D)$ in order to compute $P(A)$,
even though it is already possible to affirm that $P(A) \geq 0.9$
without any reasoning about $C$ and $D$.
Providing such a bound can be considered a \define{soft short-circuiting} that
approximates logical short-circuiting as probabilities get closer to 0 and 1, but such ability is absent from most probabilistic inference
methods.

Soft short-circuiting serves as a basis for an anytime, incremental
algorithm that trades bound accuracy for time. Given more time,
the algorithm may determine that $P(C) \geq 0.8$ independently of $B$,
perhaps by recursively processing another rule $P(C | E \vee F)$,
which allows it to increase its lower bound $P(A) \geq 0.98$,
a tight bound obtained without ever reasoning about some
potentially large parts of the model (in this case, rules involving $D$
or adding information about $C$).

There are several probabilistic inference algorithms, discussed in
Section \ref{sec:related-work}, that produce bounds on query probabilities.
However, we find that most of these algorithms do not exhibit another
important property of logic reasoning algorithms:
a \define{time complexity for inference that is amortized constant in the size of the entire model} (although still exponential
on the induced width of the portion of the model
that is \emph{relevant} for computing the current bound).
This is achieved by storing formulas in a model in hash tables indexed by the variables they contain, and looking up formulas only as needed
during inference as their variables come into play.
An important application for 
probabilistic logic reasoning in the future is reasoning
about thousands or even millions of probabilistic rules
(for example in knowledge bases learned from the Web such as NELL \cite{mitchell15never}),
for which this property will be essential.

Finally, we are also interested in a third important
property of logic reasoning algorithms:
the ability to \define{produce an intelligible trace of its inference}
(such as a proof or refutation tree)
that serves as a basis for explanations
and debugging.
In probabilistic reasoning, Belief Propagation
is perhaps the closest we get to this, since local message-passing
is easily understood and the tree of messages can be used as 
a proof tree. However, BP only returns correct marginal probabilities
for graphical models without cycles, 
and non-guaranteed approximation and convergence for general graphical
models. 

In this paper, we present work in progress on \emph{Anytime Exact Belief Propagation},
an algorithm that exhibits the three properties described above:
it incrementally computes bounds on marginal probabilities that
can be provided at any time, whose accuracy can be traded off for time,
and that eventually converge to the exact marginal;
it has time complexity amortized constant in the size of the entire model;
and it produces a tree of local messages that can be used to 
explain its conclusion.

\section{Related Work}
\label{sec:related-work}

The most obvious candidates for probabilistic logic reasoning approaches
that exhibit logic properties with graceful degradation are the ones
based on logic programming: 
Bayesian Logic Programs \cite{kersting00},
PRISM \cite{sato&kameya97}, 
Stochastic Logic Programs \cite{muggleton95stochastic},
and ProbLog \cite{deraedt04probabilistic}.
While these approaches can be used with sampling, they typically
derive (by regular logic programming methods)
a proof tree for the query and evidence in order to determine which portion of the model is qualitatively relevant to them.
Only after that does the probabilistic reasoning starts.
This prevents selecting portions of the model based on \emph{quantitative} relevance.

More recently, the ProbLog group has proposed two methods for anytime inference:
\cite{RenkensAAAI14}
successively finds multiple explanations for a query,
each of them narrowing bounds on the latter's probability.
However, finding an explanation requires inference on the entire model,
or at least on a qualitatively relevant portion that may include
\emph{quantitatively} irrelevant knowledge.
\cite{VlasselaerIJCAI15}
proposes a method based on forward reasoning with iterative deepening.
Because the reasoning goes forward, there is no clear way to limit the inference
to the portion most relevant to a query (which is a form of \emph{backward} inference),
and no selection for more likely proofs.

Our calculation of bounds is equivalent to the one presented in 
\cite{leisink03bound},
but that work does not attempt to focus on relevant portions of a model
and does not exploit the graphical model's factorization as much as our method and Variable Elimination do.
Box propagation 
\cite{mooij08bounds} 
propagates \emph{boxes}, which are looser bounds
than ours and Leisink \& Kappen's bounds.
In fact, their method can easily use these tighter bounds, but in any case
it does not deal with cycles, stopping the unrolling of the model (the \emph{Bethe tree}) once
a cycle is found. Ihler \cite{ihler07b}
presents a similar method that does not stop
when a cycle is found, but is not guaranteed to converge to the exact marginal.

Liu et al \cite{aaai17a}
present a method very similar to ours based on growing an AND-OR search tree
from the query and bounding the not-yet-processed remaining of the model.
As more of the tree is expanded, the better the bounds become.
The main difference from our method is that the AND-OR search tree has a child
per value of each random variable, making it arguably less intelligible
and harder to use as a proof tree and to generalize to richer representations such as
Statistical Relational Models.

\section{Background}

\subsection{Graphical Models}

\define{Graphical models} are a standard framework for reasoning with uncertainty.
The most common types are Bayesian networks
and Markov networks.
In both cases, a \define{joint probability distribution}
for each assignment tuple \bx to $N$ random variables
is defined as a normalized product of non-negative real functions $\{\phi_i\}_{i\in 1..K}$,
where $1..K$ is short for $\{1,\dots,K\}$,
each of them applied to a subtuple $\bX_i$ of \bX:\footnote{For simplicity, we use the same symbols for both random variables and their values, but the meaning should be clear.}
\begin{align*}
P(\bX) = \frac{1}{Z}\prod_{i=1} ^K \phi_i(\bX_i),
\end{align*}
where $Z$ is a normalization constant equal to $\sum_{\bX} \prod_i \phi_i(\bX_i)$.
Functions $\phi_i$ are called \define{factors}
and map each assignment on their arguments to a \define{potential},
a non-negative real number that represents
how likely the assignment $\bX_i$ is.
This representation is called \define{factorized} due to its
breaking the joint probability into this product.
In Bayesian networks, $K = N$ and factors are conditional probabilities
$P(X_i | Pa_i)$, for each random variable $X_i$ in \bX, where $Pa_i$
are its parents in a directed acyclic graph.

For succinctness, we often do not explicitly write the arguments to factors:
\begin{align*}
P(\bX) = \frac{1}{Z}\prod_i \phi_i(\bX_i) = \frac{1}{Z}\prod_i \phi_i.
\end{align*}

The \define{marginal probability (MAR)} problem consists of computing
\begin{align*}
P(\bQ) = \sum_{\bX \setminus \bQ} P(\bX),
\end{align*}
where $\bQ$ is a subtuple of \bX
containing queried variables, and $\sum_{\bX \setminus \bQ}$ is the summation over all variables in \bX but not in \bQ.
It can be shown that $P(\bQ)=\frac{1}{Z_{\bQ}} \sum_{\bX \setminus \bQ} \prod_i \phi_i$
for $Z_\bQ$ a normalization constant over $\bQ$.
Therefore, because $Z_{\bQ}$ is easily computable if $|\bQ|$ is small,
the problem can be simply reduced to computing a \define{summation over products of factors}, which the rest of the paper focuses on.

We denote the \define{variables} (or neighbors) of a factor $\phi$ or set of factors $M$
as $Var(\phi)$ and $Var(M)$.
The \define{neighbors} $neighbors_M(V)$ of a variable $V$ given a set of factors $M$ is defined as the set of factors $\{ \phi \in M : V \in Var(\phi) \}$.
We call a set of factors a \define{model}.
The \define{factor graph} of a model $M$ is the graph with variables
and factors of $M$ as nodes and with an edge between
each factor and each of its variables.

\subsection{Belief Propagation}

Belief Propagation \cite{Yedidia:2003:UBP:779343.779352}
is an algorithm that computes the marginal probability
of a random variable given a graphical model whose factor graph
has no cycles.

Let $M$ be a set of factors and $P_M$ be 
the probability distribution it defines.
Then, for a set of variables $Q \subseteq Var(M)$,
we define:
\begin{align*}
& P_M(Q) \propto m^M_{. \leftarrow Q} \text{ (note that $m^M_{\phi \leftarrow Q}$ does not depend on $\phi$).} \\
& m^M_{V \leftarrow \phi} = \sum_{S} \phi \prod_{S^j \in S} m^{M^j}_{\phi \leftarrow S^j} \\
& \text{\qquad where } \{S^1,\dots,S^n\} \defeq Var(\phi) \setminus V,\\
& \text{\qquad       } M^j \text{ is the set of factors in }M \setminus \{ \phi \}\text{ connected to }S^j,\\
& m^M_{\phi \leftarrow V} = \prod_{\phi^j \in neighbors_M(V)} m^{M^j}_{V \leftarrow \phi^j}, \\
& \text{\qquad where } \{\phi^1,\dots,\phi^n\} \defeq neighbors_M(V),\\
& \text{\qquad       } M^j \text{ is the set of factors in }M\text{ connected to }\phi^j.\\
\end{align*}	
Note that each message depends on a number of \define{sub-messages}.
Since the factor graph is a tree (it has no cycles),
each sub-message involves a disjoint set of factors $M^j$.
This is crucial for the correctness of BP because it allows the computation to be separately performed for each branch.

If a graphical model has cycles, an iterative version of BP, \define{loopy BP}, can still be applied to it
\cite{Yedidia:2003:UBP:779343.779352}.
In this case, since a message will eventually depend on itself, we use its value from a previous iteration,
with random or uniform messages in the initial iteration.
By iterating until a convergence criterion is reached, loopy BP provides distributions, called \define{beliefs}, that in practice often approximate the marginal of the query well.
However, loopy BP is not guaranteed to provide a good approximation, or even to converge.

\subsection{Anytime Belief Propagation}

\begin{figure}
	\vspace{-0.3cm}
	\centerline{\includegraphics[width=\columnwidth]{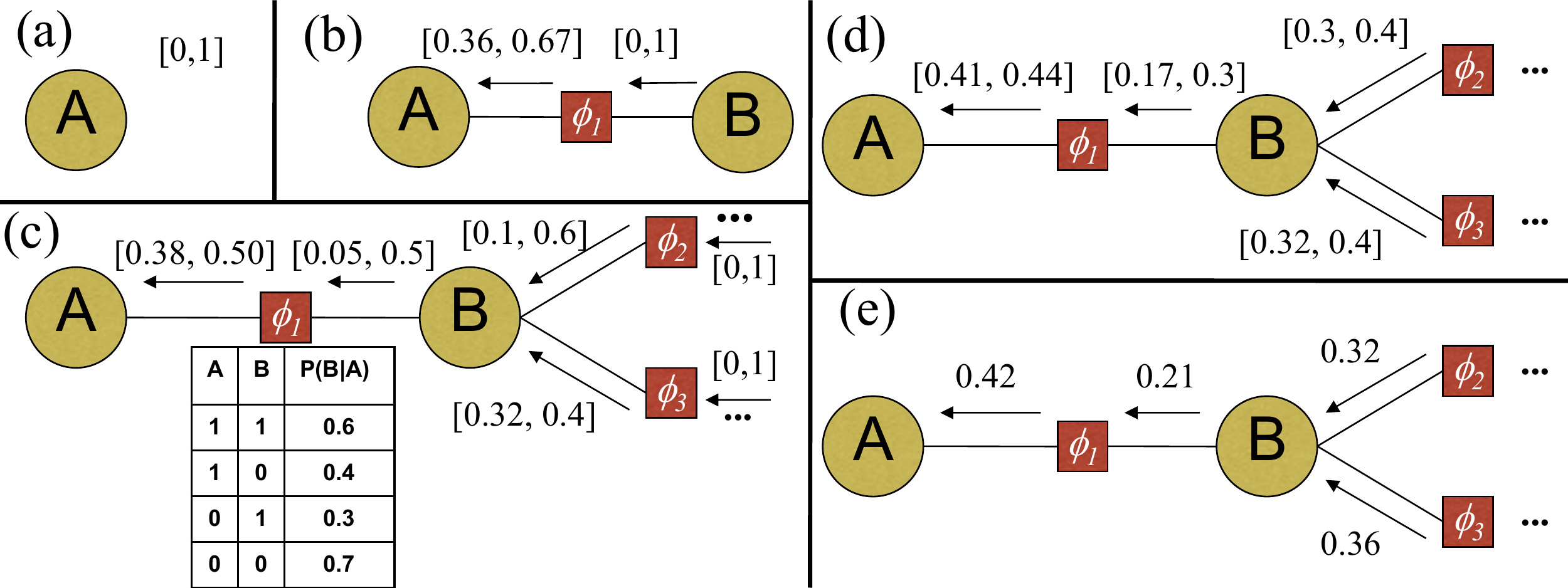}}
	\vspace{-0.3cm}
	\caption{\small Bound propagation on a binary variables network.}
	\label{fig:bound-propagation}
\end{figure}

Even though BP is based on local computations between neighboring nodes,
it only provides any information on the query's answer once it has analyzed the entire model,
even if some parts of the model have a relatively small influence on the answer.
This goes against our initial goal  providing information on the query's answer even after analyzing only a (hopefully more relevant) portion of the model.

\begin{figure*}[tp]
	\begin{multicols}{2}
		\includegraphics[trim = 0cm 3cm 0cm 0cm,
		width=\textwidth]{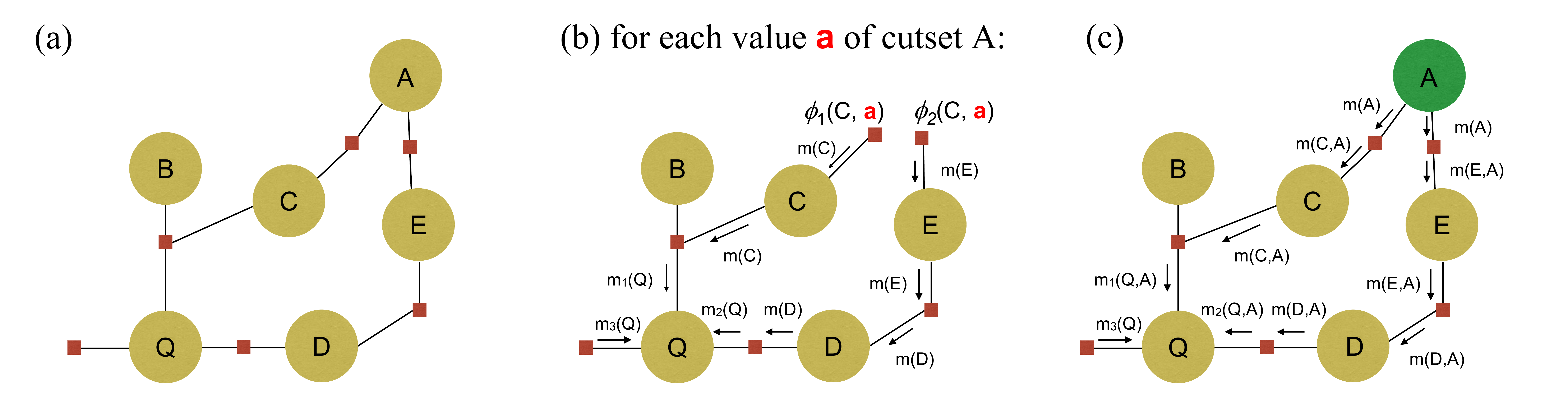}
	\end{multicols}
	\caption{\small Cutset conditioning. BP does not apply to model with cycles (a). We can fix a cycle cutset $\{A\}$ to a value $a$ and replace
		its factors by ones that take $a$ into account and no longer have
		a parameter $A$, eliminating the cycle, computing $P(Q,a)$ for each $a$,
		and obtaining $P(Q)$ as $\sum_a P(Q,a)$ (b).
		Alternatively but equivalently, we can fix $A$ and have a single pass of BP propagate messages that are
		functions of $A$ (c).}
	\label{fig:cutset-conditioning}
\end{figure*}

\define{Anytime BP} \cite{desalvobraz09anytime}
is an algorithm based on (loopy) BP that
computes iteratively improved bounds on a message.
A \define{bound} (following definitions in \cite{mooij08bounds}) on a message $m$ is any set of messages
to which $m$ is known to belong.
Anytime BP (and, later, Anytime Exact BP)
only use bounds that are convex sets of messages,
and that can therefore be represented
by a finite number of messages (the bounds \define{extrema}),
the convex hull of which is the entire bound.
Initially, the bound on a message on a variable $V$
is the \define{simplex} $\mP(V)$,
the set of all possible probability distributions on $V$,
and whose extremes are the distributions that place
the entire probability on a single value.
For example, if $V$ is a boolean random variable,
its simplex
is the set $\{ \ifte{V = true}{1}{0}, \ifte{V = false}{1}{0} \}$.

It turns out that the computation of a message $m$ given its sub-messages
is a convex function. Therefore, given the bounds on sub-messages
represented by their extremes, we can compute a bound $b(m)$ on $m$
by computing the extremes of this bound,
each extreme being equal to the message computed from
a combination of extremes to the sub-messages.
This provides a finite set of extremum messages that define $b(m)$
and can be used to compute further bounds.

Figure \ref{fig:bound-propagation} shows an example of Anytime Belief propagation on a factor network.
The algorithm provides increasingly improving bounds on the belief $m(A)$
on query $A$,
by first returning the simplex $\mP(A)$ as a bound,
then returning the bound computed from simplex sub-messages,
and then successively refining this bound by selecting one or more of the sub-messages,
obtaining tighter bounds on these sub-messages, and recomputing
a tighter bound for $m(A)$.
At every step from (b) to (d), factors are included in the set so as to complete some random variable's blanket
(we do not show the expansions from (d) to (e), however, only their consequences).
We include the table for factor 
$\phi_1$ but omit the others.
For simplicity, the figure uses binary variables only and shows
bounds as the interval of possible probabilities for value $1$, but it applies to multi-valued variables as well.

This incrementally processes the model from the query,
eventually processing it all and producing an exact bound
on the final result.
Like BP, Anytime BP is exact only for tree graphical models,
and approximate for graphical models with cycles
(in this case, the bounds are exact for the \emph{belief},
that is, they bound the approximation to the marginal).
The main contribution of this paper is Anytime \emph{Exact} BP,
which is a bounded
versions of BP that is exact for any graphical models,
presented in Section \ref{sec:anytime-exact-belief-propagation}.

\subsection{Cycle Cutset Conditioning}

If a graphical model has cycles, an iterative version of BP, \define{loopy BP}, can still be applied to it
\cite{Yedidia:2003:UBP:779343.779352}.
However, loopy BP is not guaranteed to provide a good approximation, or even to converge.

\define{Cycle cutset conditioning} \cite{pearl88probabilistic} is a way of using BP to solve a graphical model $M$ with cycles.
The method uses the concept of \emph{absorption}:
if a factor $\phi_i(\bX_i)$ has some of its variables $\bV \subseteq \bX_i$ set
to an assignment $\bv$, it can be replaced by a new factor $\phi'$,
defined on $\bX \setminus \bV$ and $\phi'(\bX_i \setminus \bV) = \phi_i(\bX_i \setminus \bV, \bv)$.
Then, cutset conditioning consists of selecting $C$, a \define{cycle cutset} random variables
in $M$ such that, when fixed to a value $c$, gives rise through absorption in all factors involving variables in $C$ to a new graphical model $M_c$ \emph{without cycles} and defined on the other variables $\bX \setminus C$
such that $P_{M_c}(\bX \setminus C) = P_M(\bX \setminus C, c)$.
The marginal $P_M(Q)$ can then be computed by
going over all assignments to $C$ and solving the corresponding
$M_c$ with BP:
\begin{align*}
P_M(Q) & = \sum_{\bX \setminus Q} P_M(\bX) \\
& =  \sum_c \sum_{\bX \setminus (Q \cup C)} P_M(\bX \setminus C, c) \\
& =  \sum_c \sum_{\bX \setminus (Q \cup C)} P_{M_c}(\bX \setminus C)\\
& =  \sum_c P_{M_c}(Q) \text{ (using BP on $M_c$)}.
\end{align*}

Figure \ref{fig:cutset-conditioning} (a) shows a model with
a cycle. Panel (b) shows how cutset conditioning for cutset $\{A\}$
can be used to compute $P(Q)$: we successively fix $A$ to each
value $a$ in its domain, and use absorption to
create two new factors $\phi_1' = \phi_1(C,a)$ and $\phi'_2(E) = \phi_2(E,a)$. This new model does not contain any cycles and BP computes $P(Q,a)$. The overall $P(Q)$ is then computed as $\sum_a P(Q,a)$.
Now, consider that the messages computed across the reduced model
that depend on $a$ can be thought of as \emph{functions} of $a$.
From that angle, the multiple applications of BP for each $a$
can be thought of as a single application of BP in which
$A$ is a fixed, free variable that is not eliminated and
becomes a parameter in the propagated messages (panel (c)).
This has the advantage of computing all messages that do \emph{not}
depend on $a$ only once.

While cutset conditioning solves graphical models with cycles exactly,
it has some disadvantages.
Like standard BP, cutset conditioning requires the entire model to be processed before providing useful information.
In fact, simply finding a cutset already requires going over the entire model, before inference proper starts.
Besides, its cost grows exponentially in the size of the cutset,
which may be larger than the induced tree width.
Our main proposal in this paper, Anytime Exact Belief Propagation,
counters those disadvantages by processing the model
in an incremental way, providing a hard bound around the exact
solution, determining the cutset during inference,
and summing out cutset variables as soon as possible
as opposed to summing them out only at the end.

\section{Anytime Exact Belief Propagation}
\label{sec:anytime-exact-belief-propagation}

We are now ready to present the main contribution of this paper,
Anytime Exact Belief Propagation (AEBP).
Like cutset conditioning, the algorithm applies
to any graphical models, including those with cycles.
Unlike cutset conditioning, it does not require
a cutset to be determined in advance,
and instead determines it \emph{on the fly},
through local message-passing.
It also performs a gradual discovery of the model, providing bounds on the final result as it goes.
This is similar to Anytime BP, but
Anytime Exact BP, as the name implies,
provides bounds on the exact query marginal probability
and eventually converges to it.

We first provide the intuition for Anytime Exact BP through an example.
Consider the graphical model in Figure \ref{fig:anytime-exact-belief-propagation-simple-loop} (the full model is shown in (e)).
If we simply apply Anytime BP to compute $P(Q)$, messages will be computed in an infinite loop.
This occurs because Anytime BP has no way of identifying loops.
AEBP, on the other hand, takes an extra measure in this regard: when it requests a new bound from one of the branches
leading to a node, it also provides the set of factors known so far to belong to the \emph{other} branches.
Any variable that is in the branch and is connected to these external factors must necessarily be a cutset variable.
Upon finding a cutset variable $C$, AEBP considers it fixed and does not sum it out,
and resulting messages are \emph{functions} of $C$ (as well we the regular variable for which we have a message).
Branches sharing a cutset variable $C$ will therefore return bounds that are functions of $C$.
Cutset variables are summed out only after messages from all branches containing $C$ are collected.

While the above procedure is correct, delaying the sum over cutset variables until the very end
is exponentially expensive in the number of them.
Figure \ref{fig:anytime-exact-belief-propagation-diamond}
shows an example in which a cutset variable ($G$) that occurs only in an inner cycle
is summed out when that cycle is processed (at node $E$), while the more global cutset variables ($C$ and $F$)
are summed out at the end, when the more global cycle is processed (at node $A$).

\begin{figure*}
	\begin{multicols}{2}
	\includegraphics[trim = 1cm 2cm 1cm 2cm, width=6.8in]{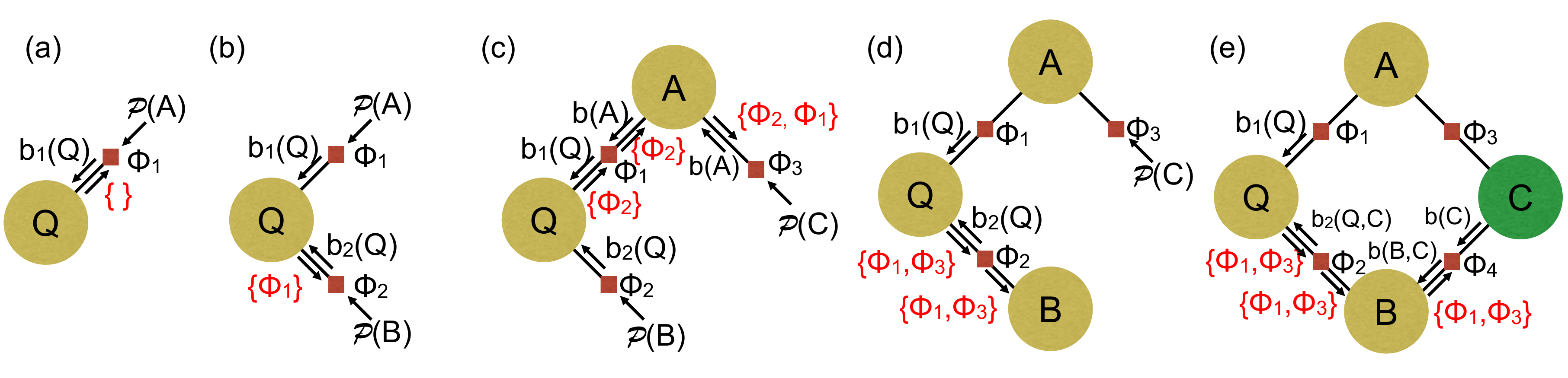}
	\label{fig:anytime-exact-belief-propagation-simple-loop}
	\end{multicols}
	\caption{\small Discovering a cutset in AEBP. $Q$ requests
	a bound on $Q$ from $\phi_1$ while telling it that
	no factors have been observed (in \textcolor{red}{red}) in other components (empty set \textcolor{red}{$\{\}$}) (a).
	The bound is computing by only assuming that the message from $A$
	is the simplex $\mP(A)$.
	It then requests a bound from $\phi_2$,
	this time telling it about $\phi_1$ having
	already been observed in a different component (b).
	In (c), it requests a better bound from $\phi_1$,
	this time telling it about having $\phi_2$,
	which triggers a request from $\phi_3$ to $A$
	with information of having previously observed $\{\phi_1,\phi_2\}$.
	In (d), a request goes all the way to $B$ with information having
	observed, among others, $\phi_3$. When $B$ requests a bound from $\phi_4$ (e),
	it is detected that $\phi_4$'s argument $C$ is also an argument
	of previously observed factor $\phi_3$, which leads to making
	$C$ a cutset variable. Bounds are then computed as a \emph{function}
	of $C$, as opposed of summing it out, all the way back to $Q$,
	where $C$ will eventually be summed out.
}
\end{figure*}

Algorithm 1 presents the general formulation.
It works by keeping track of each \define{component}, that is,
a branch of the factor graph rooted in either a variable or factor,
its $Node$,
and computing a message from $Node$ to some other requesting node
immediately outside the component.
Each component is initially set with the external factors
that have already been selected by other components.
The message is on a variable $V$ (this is $Node$ itself if $Node$
is a variable, and some argument of $Node$ if it is a factor).
In the first update, the component sets the bound to the simplex on $V$
and creates its children components: if $Node$ is a variable,
the children components are based on the factors on it that are not already external;
if it is a factor, they are based on its argument variables.

From the second update on, the component selects a child,
updates the child's external factors by including that child's
siblings external factors,
updates the child's bound,
updates its own set of factors by including the child's newly discovered
factors, and computes a new bound.
If $Node$ is a factor, this is just the product of the bounds of its children.
If $Node$ is a variable, this is obtaining by
multiplying $\phi$ and children bounds, and summing out
the set of variables 
$S$.
$S$ is the set of variables that occur only inside $\mC$,
(which excludes $V$ and cutset variables connected to external factors,
but \emph{does} include cutset variables that occur only inside this component.
This allows cutset variables to be eliminated as soon as possible in the process.
To compute the marginal probability for a query $Q$,
all that is needed is to create a component for $Q$ without
external factors and update it successively until it converges to
an exact probability distribution. 
During the entire process, even before convergence, this component tree can be used as a trace of the inference process, indicating how each message has been computed from sub-messages so far, similarly to a probabilistic proof or refutation tree in logic reasoning.

\begin{codebox}
	\Procname{\proc{\textbf{Update($\mathcal{C}$)}}}
	\zi \mC is a \define{component}, defined as a tuple \\
	$(V,Node, Bound, M, ExteriorFactors, Children)$\\
	where:
	\zi $V$: the variable for which a message is being computed
	\zi $Node$: a variable or factor \emph{from} which the \\
	\qquad \quad message on variable $V$ is being computed
	\zi $Bound$: a bound on the computed message
	\zi $Factors$: the set of factors selected for this message already
	\zi $ExternalFactors$: set of factors already observed \emph{outside}\\
	the component, and used to identify new cutset variables.
	\zi $Children$: components for the sub-messages of this message.
	\li \If first update \Then
	\li		$Bound \gets \mP(V)$
	\li \If $Node$ is variable \Then
	\li     $Factors \gets$\text{ factors with $Node$ as argument} \\
		\hspace{3cm} and \emph{not} in $ExternalFactors$	
	\li     $Children \gets $ components based on each \\
	\hspace{3.2cm}factor in $Factors$ \\
	\hspace{3.2cm}and $ExternalFactors$ set \\
	\hspace{3.2cm}to $\mC.ExternalFactors$
	\li     \Else \text{ // $Node$ is factor $\phi$}
	\li     $Bound \gets \sum_{C \cup S} \phi \prod_{Ch \in Children}$
	\li     $Children \gets $ components based on each variable\\
\hspace{3.2cm}argument (that is, neighbor) of $\phi$\\
\hspace{3.2cm}and $ExternalFactors$ set \\
\hspace{3.2cm}to $\mC.ExternalFactors \setminus \{\phi\}$
	\End
	\li \Else
	\li $Child \leftarrow chooseNonConvergedChild(Children)$
	\li $Child.ExternalFactors \gets $\\
	\qquad \qquad $ExternalFactors \quad \cup $\\
	\qquad \qquad \qquad $\bigcup \limits_{Ch \in Children \setminus \{Child\}} Ch.Factors$
	\li \textbf{\textsc{Update}}$(Child)$
	\li $Factors \gets Factors \cup Child.Factors$
	\li $ChildrenBoundProduct \gets \prod_{Ch \in Children} Ch.Bound$
	\li \If $Node$ is variable \Then
	\li     $Bound \gets ChildrenBoundsProduct$
	\li     \Else \text{ // $Node$ is factor $\phi$}
	\li     $S \gets $ variables in $ChildrenBoundProduct$ \\
	\hspace{3cm}not in any factor in $ExternalFactors$
	\li     $Bound \gets \sum_{S} \phi \prod_{Ch \in Children} Ch.Bound$
	\End
	\End
\end{codebox}
{\centering \textbf{Algorithm 1}: Anytime Exact Belief Propagation.}
\label{alg:anytime-exact-belief-propagation}

\begin{figure}
	\includegraphics[trim = 0cm 1.2cm 0cm 1cm,
	width=\columnwidth]{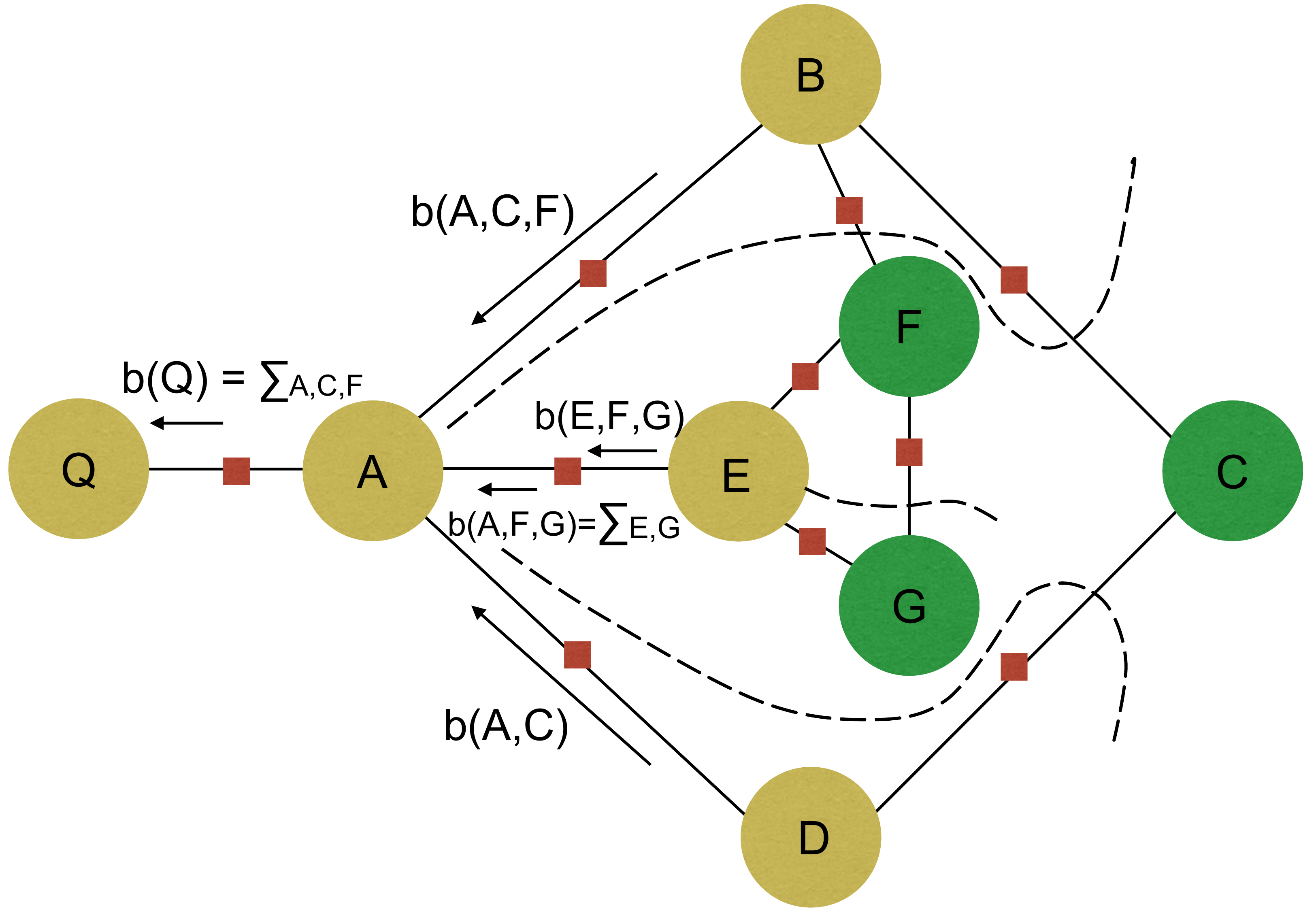}
	\caption{\small An example of AEBP that eliminates portions of the cutset separately. The dashed lines show how branches are followed from both $A$ and $E$. $G$ is detected as a cutset variable with respect to $E$
		because it connects two branches of $E$, so it is summed out along
		with $E$ when producing the bound towards $A$. $F$ is also detecting
		while exploring the branches of $E$, but it connects to a branch of $A$
		so it is not summed out when $E$ is finished, being instead summed out
		when $A$ is finished. $C$ connects two branches of $A$ so it is also
		summed out along with $A$.}
	\label{fig:anytime-exact-belief-propagation-diamond}
\end{figure}

\tikzset{neuron/.style={shape=circle, minimum size=.9cm, inner sep=0, draw, font=\small},
            			 io/.style={neuron, fill=gray!20}}

\section{Conclusion}

We presented our preliminary work on Anytime Exact Belief Propagation,
an anytime, exact inference method for graphical models that provides
hard bounds based on a neighborhood of a query.
The algorithm aims at generalizing the advantages of logic reasoning
to probabilistic models, even for dependencies that are not certain, but near certain.

Future work includes finishing the implementation, evaluating it on
benchmarks, and generalizing it higher-level logic representations
such as relational models and probabilistic programs. To achieve that,
we will employ techniques from the lifted inference literature
\cite{poole03first,desalvobraz07thesis,milch08lifted,vanDenBroeck11lifted,kersting12lifted}
as well as probabilistic inference modulo theories
\cite{desalvobraz16probabilistic}.

\bibliographystyle{aaai}
\bibliography{bib}

\comment {
\appendix

\section{Belief Propagation with Separator Conditioning: an exact message-passing inference algorithm for any graphical model}

Belief Propagation is equivalent to the inference calculation  when the graphical model does not present any cycles. 
The same can not be stated for general graphical models, where loopy belief propagation algorithm may not converge. If it does,  not necessarily it converges to an optimal solution.

We based our anytime algorithm, therefore, upon an adaptation of the belief propagation, where we ``break'' the cycles by summing  out some variables in a later step of the process. This provides us with an exact elimination algorithm.

    \subsection{Definitions}
We provide some definitions  that  are useful in defining the algorithm.
    \subsubsection*{Partition}
    Let M be a set of factors. A partition $Pt_M$ of M id a tuple of nonempty sets of factors
    $$Pt_M = (Pt^1,\dots,Pt^k)$$
    such that each element of M is exactly in one of the subsets $Pt^i$. That is to say that :
    
    \begin{itemize}
    \item for $i = 1,\dots,k \ Pt^i \neq \emptyset$
    \item for $i,j = 1,\dots,k \ i\neq j \ , \  Pt^i\cap Pt^j = \emptyset$
    \item $\bigcup_{i=1,\dots,k}Pt^i=M $
    \end{itemize}
    \subsubsection*{Separator}
    
	A separator of set of a partition $Pt_M$ is defined as following:
\begin{align*}
	Sep(Pt_M) =\{V: & \exists \ i,\ j\ , \ i\neq j, \ s.t.\ \\
    				& V\in Var(Pt^i)\cap Var(Pt^j) \}
\end{align*}  
	
	\subsection{BP with conditioning}
	Let $M$ be a set of factors.
	Then, for a set of variables $Q \subseteq Var(M)$,
	we can employ Belief Propagation with Separator Conditioning (S-BP) to compute $P(Q)$:
	
	\begin{align*}
		& P_M(Q) \propto m^{M}_{. \leftarrow Q} \text{ (note that $m^M_{\phi \leftarrow Q}$ does not depend on $\phi$).} \\
		& m^{M}_{V \leftarrow \phi} = \sum_C \sum_{S\setminus C} \phi \prod_{S^j \in S} m^{Pt^j}_{\phi \leftarrow S^j \cup C} \\
		& \text{\qquad where } \{S_1,\dots,S_n\} = Var(\phi) \setminus V,\\
		& \text{\qquad         $(Pt^1,\dots,Pt^n)$ is any $n$-partition of $M \setminus \{\phi\}$,} \\
		& \text{\qquad         $C$ is the $Sep(Pt\setminus\{\phi\})$,} \\
		& m^{M}_{\phi \leftarrow V} = \sum_C \prod_{\phi^j \in neighbors_M(V)} m^{Pt^j}_{V \cup C \leftarrow \phi^j}, \\
		& \text{\qquad where } neighbors_M(V) \text{ are the factors on $V$ in $M$}, \\
		& \text{\qquad         $(Pt^1,\dots,Pt^n)$ is any $n$-partition of $M$ such that $\phi^j \in Pt^j$,} \\
		& \text{\qquad         $C$ is $Sep(Pt)\setminus V$.}
	\end{align*}	
	
	\begin{proof}
		We prove that $m^M_{\phi \leftarrow V} \propto P_M(V)$ and $m^M_{V \leftarrow \phi} \propto P_M(V)$ by induction on the size of $M$.
		If $|M|=0$, $m^M_{\phi \leftarrow V} = 1$, which when normalized is equal to $Uniform(V) = P_M(V)$.
		$m^M_{V \leftarrow \phi}$ is undefined for $|M|=0$, but for $|M|=1$ (that is, $M=\{\phi\}$),
		$C=\emptyset$ and $m^M_{V \leftarrow \phi} = \sum_S \phi(V,S) \propto P_M(V)$.
		
		If $|M|>0$, 
		\begin{align*}
			m^M_{\phi \leftarrow V} & = \sum_C \prod_{\phi^j \in neighbors_M(V)} m^{Pt^j}_{V \cup C \leftarrow \phi^j}\\
			& \propto \sum_C \prod_{\phi^j \in neighbors_M(V)} P_{Pt^j}(V \cup C)\\
			& = \sum_C \prod_{\phi^j \in neighbors_M(V)} \sum_{Var(Pt^j)\setminus(V \cup C)}Pt^j\\
			& = \sum_C \sum_{Var(M)\setminus(V \cup C)}\prod_{\phi^j \in neighbors_M(V)} Pt^j\\
			& = \sum_{Var(M)\setminus V}\prod_{\phi^j \in neighbors_M(V)} Pt^j\\
			& = \sum_{Var(M)\setminus V} M\\
			& \propto P_M(V)\\
		\end{align*}
        
		\begin{align*}
			m^M_{V\leftarrow \phi} & = \sum_C \sum_{S \setminus C} \phi \prod_{S^j \in S} m^{Pt^j}_{\phi \leftarrow S^j \cup C} \\
			& \propto \sum_C \sum_{S \setminus C} \phi \prod_{S^j \in S} P_{Pt^j}(S^j \cup C)\\
			& = \sum_C \sum_{S \setminus C} \phi P_{M \setminus{\phi}}(S \cup C)\\
			& \propto \sum_C \sum_{S \setminus C} \phi \prod_{\phi' \in M \setminus \phi} \phi'\\
			& \propto \sum_C \sum_{S \setminus C} P_{M}(S \cup C)\\
			& = \sum_{S \cup C} P_{M}(S \cup C)\\
			& = \sum_{Var(M) \setminus V} P_{M}(Var(M) \setminus V)\\
			& = P_M(V).
		\end{align*}
	\end{proof}
	
	S-BP reduces to regular BP if the model does not contain loops. Otherwise,
	it uses separators to ensure incoming messages are
	computed on components (sets of factors) that are disjoint given a separator set.
	The separator set for a message is marginalized out at the end of its computation.
	
	Unlike BP, S-BP is not a deterministic algorithm, since there may be several
	choices for $Pt$.
	It can be made deterministic by representing the set of factors $M$
	as a tree of all partitions to be used in all message computations.
	Later, we will use partial specifications of $M$ in this manner
	to represent bounds on $M$.
	
\section{Bounding Functions}
	
	The following definition introduces the concept of iteratively bounding the value $f(a)$ of a function $f$ for a \emph{unknown} domain element $a$.
	A \emph{bound} on some mathematical object $\alpha$ is simply some set containing $\alpha$.
	A bounding function does two things given a bound on $a$: it computes a bound on $f(a)$,
	and computes a strictly tighter bound (a smaller set) on $a$ in the process.
	This yields a bounding sequence generated by an iterative process in which partial or complete ignorance on $a$ and therefore $f(a)$ leads to tighter and tighter bounds on $a$ and $f(a)$ until we converge to the exact values.
	
	When we apply this framework to  our particular application of exact BP, $a$ will be the model $M$ and the cutset $Q$, and $f$ will be the messages on them. The bounds on $(M,Q)$ will be based keep partial descriptions of $M$ and $Q$ and convergence is obtained when they are fully analyzed. We will have a bound on the messages during the entire process, making the algorithm anytime. However, we define the notions of bounding functions and sequences in an abstract way that applies to any functions and sets, and those in probabilistic inference are just one case.
	
	The intuition behind a bounding function is that, instead of receiving an exact argument $a$,
	it receives a set of possible arguments $A' \subseteq A$ containing $a$ (a bound on $a$), and returns the possible results $B = f(A')$ (and thus guaranteed to contain $f(a)$), and also a strictly smaller $A'' \subset A'$ determined during the processing. $A''$ can then be fed again to $\bar{f}$ to obtain an even better bound on $f(a)$.
	
	\begin{definition}[Bounding function]
		Let $f : A \rightarrow B$ be any function, and $a \in A$.
		A function $\bar{f} : 2^A \rightarrow (2^A, 2^B)$ is a \emph{bounding function of }$f$
		with respect to $a$ if, for any $A' \subseteq A$ for which $a \in A'$,
		$\bar{f}(A') = (A'', B)$ satisfies $a \in A''$, $A'' \subset A'$, and $B = \{ f(a'') : a'' \in A'' \}$.
	\end{definition}
	
	\begin{theorem}[Bounding Sequence]
		\label{the:bounding-sequence}
		Let $A$ be a finite, discrete set,
		$f : A \rightarrow B$ be any function with domain $A$,
		and $\bar{f}$ a bounding function of $f$.
		Then there is $n \in \mathbb{N}$ and a sequence defined by
		\begin{align*}
			& A_0 = A; B_0 = B \\
			& (A_i , B_i) = \bar{f}(A_{i - 1}), i = 1,\dots,n
		\end{align*}
		such that, for all $i \geq 1$,
		$B_i \subset B_{i-1}$, $f(a) \in B_i$, and $B_n = \{ f(a) \}$.
	\end{theorem}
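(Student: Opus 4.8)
The plan is to prove Theorem~\ref{the:bounding-sequence} by iterating the bounding function $\bar{f}$ starting from the trivial bound $A_0 = A$, and showing that the strict-shrinkage property of $\bar{f}$ forces the sequence to terminate, at which point the bound on $f(a)$ must be exact. First I would verify that the sequence is well-defined: at each step $i$, the definition of a bounding function requires $a \in A_{i-1}$ before it can be applied to $A_{i-1}$, so I would check inductively that $a \in A_i$ for all $i$. The base case $a \in A_0 = A$ holds since $a \in A$; for the inductive step, if $a \in A_{i-1}$ then $\bar{f}(A_{i-1}) = (A_i, B_i)$ satisfies $a \in A_i$ directly by the defining property. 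So the recursion never gets stuck as long as we have not yet reached a fixed point.

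Next I would establish termination. The key observation is that $\bar{f}$ returns $A_i \subsetneq A_{i-1}$ strictly whenever it is applied, so $|A_0| > |A_1| > |A_2| > \cdots$ is a strictly decreasing sequence of nonnegative integers (here finiteness of $A$ is essential). Such a sequence cannot continue forever; there is a least $n$ such that we cannot apply $\bar{f}$ to $A_n$ to get a strictly smaller set — but by the definition of a bounding function, the only way $\bar{f}(A_n)$ can fail to produce a strictly smaller set is if $A_n$ is already a singleton. Since $a \in A_n$, we must have $A_n = \{a\}$. I should be slightly careful about how the definition handles the singleton case: a bounding function as literally defined always returns a strict subset, which is impossible for a singleton, so implicitly the process just stops when $A_i = \{a\}$; I would note this edge-case reading explicitly, or equivalently define $n$ as the first index with $|A_n| = 1$, which is reached after at most $|A| - 1$ steps.

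Then I would read off the conclusions about the $B_i$. For each $i \ge 1$, the defining property gives $B_i = \{ f(a'') : a'' \in A_i \}$, so $f(a) \in B_i$ because $a \in A_i$. For the strict inclusion $B_i \subsetneq B_{i-1}$, I note $A_i \subsetneq A_{i-1}$ gives $B_i = f(A_i) \subseteq f(A_{i-1}) = B_{i-1}$ as sets of images; strictness requires a small argument, since in principle $f$ could map the discarded elements of $A_{i-1}$ into values already present in $f(A_i)$. Here the cleanest route is to observe the theorem only claims $B_i \subset B_{i-1}$, and if this is meant as non-strict ($\subseteq$) it is immediate; if strict is intended, one should strengthen the hypothesis on $\bar f$ (e.g.\ that it never keeps an element $a'$ with $f(a') = f(a'')$ for some other retained $a''$) or simply weaken the claim. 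Finally, at $i = n$ we have $A_n = \{a\}$, hence $B_n = \{ f(a) \}$, which is the exact value, completing the proof.

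The main obstacle I anticipate is not the termination argument — that is a routine descent on $|A_i|$ — but rather pinning down the precise semantics of $\bar f$ at the terminal step and whether the monotonicity of the $B_i$ is genuinely strict. The literal definition of ``bounding function'' demands $A'' \subsetneq A'$ unconditionally, which is unsatisfiable once $A' = \{a\}$, so the statement is really about the maximal chain one can build before this obstruction, and the cleanest fix is to phrase $n$ as the first index achieving $|A_n| = 1$ and treat the last application as vacuous or simply not performed. I would flag this in the write-up rather than hide it, since it is the only subtle point in an otherwise straightforward induction.
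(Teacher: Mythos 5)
Your proof is correct and follows essentially the same route as the paper's: a descent on $|A_i|$, which the paper phrases as an induction on $|A_i|$ to establish $B_n = \{f(a)\}$ while dismissing the remaining claims as immediate from the definition. The two subtleties you flag --- that the definition's unconditional requirement $A'' \subset A'$ is unsatisfiable once $A_n = \{a\}$, so $n$ must be read as the first index with $|A_n| = 1$, and that strictness of $B_i \subset B_{i-1}$ does not follow from $A_i \subset A_{i-1}$ alone when $f$ is not injective --- are genuine imprecisions that the paper's one-paragraph proof silently glosses over, and your more careful treatment is the better one.
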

	
	\begin{proof}
		The existence of the sequence and its properties, except for $B_n = \{f(a)\}$, comes directly from the definition
		of a bounding function.
		We prove $B_n = \{f(a)\}$ by induction on $|A_i|$.
		If $|A_i|=1$, $A_i=\{a\}$ and $B_i = \{f(a)\}$
		by the definition of bounding function.
		If $|A_i| > 1$, then $|A_{i+1}| < |A_i|$, and by induction
		it defines a sequence from $(A_{i+1}, B_{i+1})$ to $(A_n, B_n)$ such that $B_n = \{ f(a) \}$.
		Therefore, $A_i$ defines a sequence from $(A_i, B_i), (A_{i+1}, B_{i+1})$ to $(A_n, B_n)$  such that $B_n = \{ f(a) \}$.
	\end{proof}
	
	
	
	
	
	\section{Bounded Anytime Exact BP}\label{AtBPMath}
	
	We now define bounding functions for the message functions in BP.
	
	\begin{definition}[Partition tree]
		For a carrier set $\mathbb{M}$, we inductively define \emph{partition tree on $\mathbb{M}$} as either of the following objects:
		\begin{itemize}
			
			\item a set $\mathbb{M} \setminus \Phi^{ext}$, where $\Phi^{ext} \subseteq \mathbb{M}$
			(that is to say, a set defined by exclusion).
			\item a tuple $(Pt_1,\dots,Pt_n)$ of partition trees on $\mathbb{M}$.
		\end{itemize}
	
	A \emph{complete} partition tree is a partition tree in which all the leaves are equal to $\mathbb{M} \setminus \mathbb{M} = \emptyset$.
	\end{definition}
	
	When $\mathbb{M}$ is a set of factors, we can annotate a partition tree
	with a set of external factors $\Phi^{ext}$ that do not occur in it, a set
	of external separator variables $D^{ext}$ present in more than one of its partition sub-trees \emph{and}
	also in the external factors $\Phi^{ext}$, and
	a set of internal separator variables $D$ present in more than one of its partition
	subtrees but not in the external factors $\Phi^{ext}$.

	A partition tree $Pt$ induces a set $completion(Pt)$ all the full partition trees consistent with $Pt$. A set of full partition trees can be used as a bound on a single, unknown full partition tree.
	
	We will use such bounds, represented by a partial partition tree, as the bounds $A_i$ on an unknown full partition tree passed to bounding functions for the message functions $m_{\phi \leftarrow V}$ and $m_{V \leftarrow \phi}$ .

	The bounding function obtains a partial partition tree $Pt_i$ representing an unknown full partition tree, chooses an $j$-th incoming message
	that has not converged yet, obtains a tighter bound $Pt^j$ and a tighter bound $B^j_{i+1}$ on
	the respective incoming message, and generates a tighter $Pt_{i+1}$ and bound $B_{i+1}$ on message as a result.
	
	For fixed $V$ and $\phi$, the message calculations $m^{Pt,D,D^{ext}}_{V \leftarrow \phi}$ and $m^{Pt,D,D^{ext}}_{\phi \leftarrow V}$ are functions on a complete $Pt$.
	The bounding functions can be defined as functions on partial $Pt$ as follows:
	
	\begin{theorem}[Anytime Belief Propagation]
		Given a set of factors $M$ and a cutset $Q$, $\bar{m}_{V}$ and $\bar{m}_{V \leftarrow \phi}$ are bounding functions with respect to $P_M(Q)$. (Note that $A_0 = 2^{\mathbb{M}} \times compl(Q)$ and $B_0 = \mathbb{P}(V)$.)
	\end{theorem}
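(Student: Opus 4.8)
The plan is to check that the two message-update maps $\bar{m}_{\phi \leftarrow V}$ and $\bar{m}_{V \leftarrow \phi}$ satisfy the three defining conditions of a bounding function with respect to the exact (S-BP) message function $m^{Pt,D,D^{ext}}$, and then to appeal to Theorem~\ref{the:bounding-sequence} to conclude convergence to $P_M(Q)$. Concretely: given a bound $A'$ on the unknown complete partition tree $a$ — where a bound is represented by a partial partition tree $Pt$, i.e.\ the set $completion(Pt)$ of complete partition trees consistent with it — I must show the update returns $(A'', B)$ with $a \in A''$, $A'' \subset A'$, and $B = \{ m(a'') : a'' \in A'' \}$. I would run a mutual induction on the size of $M$ (equivalently, the depth of the component tree), exactly parallel to the S-BP correctness proof above, since $\bar{m}$ is the ``set-valued lift'' of the S-BP recursion: wherever S-BP multiplies or sums potentials, the bounding version multiplies or sums convex bounds represented by their extrema.

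For the base case, the first update sets $Bound \gets \mathbb{P}(V)$, and I observe this is a valid initial bound $B_0$ on the message on $V$, since every S-BP message on $V$ is a sub-normalized distribution on $V$ and hence lies in $\mathbb{P}(V)$ (matching the $|M|=0,1$ cases of the S-BP proof). For the inductive step there are two cases, according to whether $Node$ is a variable or a factor $\phi$. In both, the recomputed bound is assembled from the children's bounds by exactly the arithmetic S-BP uses: $\prod_{Ch} Ch.Bound$ for a variable node, and $\sum_{S}\phi\prod_{Ch} Ch.Bound$ (or $\sum_{C \cup S}$ on the first update) for a factor node. Here I would invoke the two facts stated in the Anytime BP section: (i) the map from sub-messages to the resulting message (multiply by $\phi$, sum out a variable set) is convex, so the image of a product of convex sub-message bounds is convex and its extrema are attained at tuples of extrema of the sub-message bounds; and (ii) by the induction hypothesis each $Ch.Bound$ is \emph{exactly} the image of that child's completion set under its sub-message function. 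Composing (i) and (ii) yields that the recomputed $Bound$ is exactly the image, under $m$, of the updated completion set $A''$ — namely the product of the children's completion sets intersected with the partition and external-factor commitments already fixed at this node.

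For containment ($a \in A''$) and strict shrinkage ($A'' \subset A'$, proper subset): containment holds because every commitment the algorithm makes is satisfied by the true tree — the factors folded into a child's external-factor set genuinely occur only in sibling components, and a variable newly flagged as a cutset variable genuinely connects two components — so $a$ stays in $completion(Pt_i)$ at every step; strict shrinkage holds because \emph{chooseNonConvergedChild} returns a child whose completion set is not yet a singleton, and by the induction hypothesis the recursive update strictly shrinks that child's bound, hence strictly shrinks the product (when every child has converged, the component itself has converged, which is the singleton base case of Theorem~\ref{the:bounding-sequence} and where the process stops). With the three conditions established, Theorem~\ref{the:bounding-sequence} applies to the top-level component for $Q$ (with $A_0 = 2^{\mathbb{M}} \times compl(Q)$ and $B_0 = \mathbb{P}(V)$): the set of complete partition trees over the finite model $M$ is finite, so the generated sequence terminates at some $B_n = \{ m(a) \}$, which after the cheap normalization over $Q$ is $\{ P_M(Q) \}$.

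The main obstacle I anticipate is the exactness clause $B = \{ m(a'') : a'' \in A'' \}$ in the factor case, and within it the bookkeeping of separator/cutset variables: one must show that the variable set actually summed out at a factor node — $S$, and $C$ on creation, defined as the internal variables not occurring in any external factor — is precisely the set S-BP marginalizes at that point, so that cutset variables eliminated ``lazily'' deep in the tree, together with those eliminated later in enclosing components, reproduce the S-BP identities $m^{M}_{V \leftarrow \phi} \propto P_M(V)$ and $m^{M}_{\phi \leftarrow V} \propto P_M(V)$ on every complete partition tree. In short, the crux is proving that Algorithm~1 and S-BP compute the same function of a complete partition tree; once that equivalence is in place, the bounding-function properties follow from convexity together with the finiteness argument with little further work.
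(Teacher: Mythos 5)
The paper never actually proves this theorem: the appendix states it and then gives only the definitions of $\bar{m}_{V\leftarrow\phi}$ and $\bar{m}_{\phi\leftarrow V}$, with no proof environment following. So there is no official argument to compare yours against. That said, your architecture is the one the paper's supporting results are clearly built for — a mutual induction mirroring the S-BP correctness proof, the convexity lemmas to propagate bounds by their extrema, and the Bounding Sequence theorem for termination — and the crux you single out (that Algorithm 1 and S-BP marginalize the same separator variables at the same nodes on every complete partition tree) is indeed the right crux.

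There is, however, a genuine gap in your handling of the exactness clause $B = \{ f(a'') : a'' \in A'' \}$. Your base case establishes only \emph{containment}: every S-BP message on $V$ lies in $\mathbb{P}(V)$, but $\mathbb{P}(V)$ is not equal to the image of $completion(Pt_{i+1})$ under the message function — it is almost always a strict superset. Your inductive step then assumes each $Ch.Bound$ is \emph{exactly} the image of that child's completion set, which the base case does not deliver, so the induction does not close. The same defect recurs at internal nodes even after all simplex initializations are gone: the recomputed bound is the set of all combinations $\sum_{S}\phi\prod_j b^j$ with each $b^j$ ranging \emph{independently} over its child's bound, but sibling completions are not independent (they must partition the same residual factor set and agree on which variables are separators), so combinations arising from mutually inconsistent completions are included in $B$ yet realized by no single $a'' \in A''$. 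The honest conclusion at every step is $\{ f(a'') : a'' \in A'' \} \subseteq B$, not equality. The repair is to prove that weaker statement: relax the third clause of the bounding-function definition to containment, check that the Bounding Sequence theorem survives the relaxation (monotone strict shrinkage of the $A_i$ still forces termination, and once $A_n = \{a\}$ every sum and product in the recursion ranges over singletons, so $B_n$ collapses to $\{f(a)\} = \{P_M(Q)\}$ by the S-BP correctness proof), and observe that containment is all the anytime guarantee needs. As written, the equality you assert in the inductive step is false under the paper's literal definition, and the argument does not go through without this modification.
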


	\begin{align*}
		& \bar{m}_{V \leftarrow \phi}^{Pt_i, D_i, D^{ext}_i} = ((Pt_{i+1},D_{i+1},D^{ext}_{i+1}),B_{i+1}) \\
		& \text{\qquad where } \\
		& \qquad \text{if $Pt_i$ is of the form $\mathbb{M} \setminus \Phi^{ext}_i$,}\\
		& \qquad \qquad \text{$\{S_{V \leftarrow \phi}^1,\dots,S_{V \leftarrow \phi}^n\}$ is $Var(\phi) \setminus (V \cup D^{ext}_i)$}\\
		& \qquad \qquad \text{$Pt^j_{i+1}$ is $\mathbb{M} \setminus (\Phi_i^0 \cup \{\phi\}), j = 1,\dots,n$}\\
		& \qquad \qquad \text{$Pt_{i+1}$ is $(Pt^1_{i+1},\dots,Pt^n_{i+1})$}\\
		& \qquad \qquad \text{$B_{i+1}$ is $\mathcal{P}(V)$}\\
		& \qquad \text{else} \\
		& \qquad j = \text{ index of incoming message not yet exhausted} \\
		& \qquad ((Pt_{i+1}^j, D_{i+1}^j, D_{i+1}^{ext,j}), B_{i+1}^j) = \bar{m}_{\phi \leftarrow S_i^j \cup D_i}^{Pt_i^j, D_i^j, D_i^{ext}} \\
		& \qquad ((Pt_{i+1}^k, D_{i+1}^k, D_{i+1}^{ext,k}), B_{i+1}^k) = ((Pt_{i}^k, D_{i}^k, D_{i}^{ext,k}), B_{i}^k),\\
        & \qquad \qquad \qquad \qquad \qquad \qquad \qquad \qquad \qquad \qquad \qquad k \neq j \\
		& \qquad D^{ext}_{i+1} = \bigl( \bigcup_j D^j_{i+1} \bigr) \cap Var(\Phi^{ext}_i)\\
		& \qquad D_{i+1} = \bigl( \bigcup_j D^j_{i+1} \bigr) \setminus D^{ext}_{i+1}\\
		& \qquad B_{i+i} = \sum_{D_{i+1}} \sum_{S_{V \leftarrow \phi} \setminus D_{i+1}} \phi \prod_{j=1}^n B_{i+1}^j \\
		\end{align*}
        \begin{align*}
		& \bar{m}_{\phi \leftarrow V}^{Pt_i, D_i, D^{ext}_i} = ((Pt_{i+1},D_{i+1},D^{ext}_{i+1}),B_{i+1})\\
		& \text{\qquad where } \\
		& \qquad \text{if $Pt_i$ is of the form $\mathbb{M} \setminus \Phi^{ext}_i$,}\\
		& \qquad \qquad \text{$\{\phi^1_{\phi \leftarrow V},\dots,\phi_{\phi \leftarrow V}^n\}$ is $neighbors_{Pt_i}(V) \setminus \{\phi\}$}\\
		& \qquad \qquad \text{$Pt^j_{i+1}$ is $\mathbb{M} \setminus (\Phi_i^{ext} \cup \bigcup_{k\neq j}\phi_{\phi \leftarrow V}^k), j = 1,\dots,n$}\\
		& \qquad \qquad \text{$Pt_{i+1}$ is $(Pt^1_{i+1},\dots,Pt^n_{i+1})$}\\
		& \qquad \qquad \text{$B_{i+1}$ is $\mathcal{P}(V)$}\\
		& \qquad \text{else} \\
		& \qquad j = \text{ index of incoming message not yet exhausted} \\
		& \qquad ((Pt_{i+1}^j, D_{i+1}^j, D_{i+1}^{ext,j}), B_{i+1}^j) = \bar{m}_{V \leftarrow \phi_{\phi \leftarrow V}^j}^{Pt_i^j, D_i^j, D_i^{ext}} \\
		& \qquad ((Pt_{i+1}^k, D_{i+1}^k, D_{i+1}^{ext,k}), B_{i+1}^k) = ((Pt_{i}^k, D_{i}^k, D_{i}^{ext,k}), B_{i}^k),\\
        & \qquad \qquad \qquad \qquad \qquad \qquad \qquad \qquad \qquad \qquad \qquad k \neq j \\
        & \qquad D^{ext}_{i+1} = \bigl( \bigcup_j D^j_{i+1} \bigr) \cap Var(\Phi^{ext}_i)\\
		& \qquad D_{i+1} = \bigl( \bigcup_j D^j_{i+1} \bigr) \setminus D^{ext}_{i+1}\\
		& \qquad \text{$Pt_{i+1}$ is $(Pt_1,\dots,Pt_n)$}\\
		& \qquad B_{i+i} = \sum_{D_{i+1}} \prod_{j=1}^n B_{i+1}^j
	\end{align*}	

\section{Computing with bounds}

In the algorithm described on section \ref{AtBPMath}, we make two operations that can be specially expensive:

$$B_{i+1} = \Bigl\{ \sum_N \phi(V, N) \prod_{N_j \in N} b^j : b^j \in B_{i}^j \Bigr\}$$
and
$$B_{i+i} = \Bigl\{ \prod_{j = 1..N} b^j : b^j \in B_{i}^j \Bigr\}$$

One can easily see that the computation of such expressions is not computationally possible in the general case: if we consider bounds with an infinite number of elements there would be an infinite number of computations to do.
However, this can be a relatively inexpensive computation in certain conditions, such as in the case where the $B$'s are convex sets with a finite number of extreme points. In this section we show that these conditions are fulfilled in our case and provide a way to preform this operations.

\subsection{The simplex bound}

We define a bound on a query Q as a set of non-normalized probabilities for that query. The simplest and most general bound we can assign to a query is the simplex relative to Q, noted as $\mathcal{P}_Q$. The simplex is defined as:

$$ \mathcal{P}_Q \equiv \{\phi(Q) : Q \rightarrow \mathbb{R}, \ \sum\limits_{q\in Q}\phi(q) = 1\} $$

One can easily see that a simplex is a convex set. Its extreme points are:

$$ext(\mP_Q) = \{\phi : \exists q\in Q \text{ s.t. } \phi(q) = 1 \text{ and }\forall p \neq q \  \phi(p) = 0 \}  $$
Which is a finite set.

\subsection{Operations with sets (definitions)}

This means that the probability distribution of $Q$ can be any function whose sum over the values of $Q$ is equal to one. That makes $\mathcal{P}_Q$ a trivial bound for $\Prob_M(Q)$.

We define the product of bounds as the set of products of each term from both sets. The product between a bound and a term is defined the same way.

$$B_1 \times B_2 \equiv \{\phi_1\times \phi_2 : \phi_1 \in B_1, \phi_2 \in B_2\}$$
$$B \times \phi \equiv \{\phi \times \phi' : \phi' \in B\}$$

We define the normalization operator $\mN$ in the following manner. Given a convex set functions $B$:

$$\mN(B) = \Bigr\{\frac{\phi}{\sum\limits_{domain(\phi)}\phi} : \phi \in B \Bigl\} $$

\subsection{Proving convexity}

Consider a function $\phi : D_{\phi} \rightarrow \mathbb{R}$, with $ D_{\phi}$ finite, and $B$ a convex set of functions $f:D_B\rightarrow \mathbb{R}$ with finite extreme points. Let $A \subset D_{\phi}$. 
. We prove that:
\begin{itemize}
\item $\mN(\sum_A \phi B)$ is convex 
\item $ext(\mN(\sum_A \phi B)) \subset \mN\sum_A \phi \ ext(B)$
\end{itemize}

We split the proof in two parts:

\subsubsection{1}
we want to prove that: 

\begin{itemize}
\item $\sum_A \phi B$ is convex :

Consider $\psi_1,...,\psi_k \in  \sum_A \phi B$. Then $\psi_i = \sum_A \phi \psi'_i$, $i = 1,...,k$, $\psi'_i \in B$. consider $\alpha_j>0$, $j= 1,..,k$ such that $\sum_j\alpha_j=1$.
Then:

$$\sum_{i= 1,...,k}\alpha_i\psi_i=\sum_{i= 1,...,k}\alpha_i\sum_A\phi\psi_i=\sum_A\phi\sum_{i= 1,...,k}\alpha_i\psi_i=$$

$$=\sum_A\phi(\psi'')\text{ , with } \psi''\in B$$

This implies that $\sum\limits_{i= 1,...,k}\alpha_i\psi_i \in \sum_A \phi B$, i.e, that $\sum_A \phi B$ is convex.

\item $ext(\sum_A \phi B) \subset \sum_A \phi \ ext(B)$

The proof follows by contradiction.

Consider $\psi \in ext(\sum_A\phi B)$ and suppose that $\psi \notin \sum_A\phi ext(B)$. Then $\psi = \sum_A\phi\psi'$, with $\psi' = \sum_i\alpha_i\psi'_i$, $\psi'_i \in B$ and $\alpha_i>0$ summing to 1. Then:

$$\psi = \sum_A\phi \sum_i\alpha_i\psi'_i = \sum_i\alpha_i(\sum_A\phi\psi'_i) = \sum_i\alpha_i\psi_i$$

with $\psi_i \in \sum_A\phi B$. Then $\psi$ is a convex combination of elements in $\sum_A\phi B$, which s a contradiction on the assumption that it is an extreme point.

\end{itemize}

\subsubsection{2}
we want to prove that: 

\begin{itemize}
\item $\mN(B)$ is convex 

We note $\sum\limits_{domain(\phi)}\phi = |\phi|$ for any function  $\phi$.

Let $\psi = \sum_i \alpha_i\psi_i$ be a convex combination of $\psi_i\in \mN(B)$. One can easily see that

$$\psi \in \mN(B) \iff |\psi| = 1 \text{ and } \psi\propto \psi' \text{ for some }\psi' \in B$$

We prove that $|\psi| = 1$ through the following equation:

$$|\psi| = |\sum_i\alpha_i\psi_i| = \sum_i\alpha_i|\psi_i| = \sum_i\alpha_i\times 1 = 1$$

Now we prove the second part of the equivalence stament:

We have $\forall i , \ \psi_i = \frac{\psi_i'}{|\psi_i'|} \text{ for some } \psi'_i \in B $. Then:

$$\psi = \sum_i \alpha_i\psi_i = \sum_i\frac{\alpha_i}{|\psi_i'|}\psi'_i = \sum_i\beta_i\psi'_i $$

Where $\beta_i = \frac{\alpha_i}{|\psi_i'|} $. Calling $ N = \sum_i\beta_i$, we have that $\sum_i \frac{\beta_i}{N}\psi_i \in B$ because it is a convex combination of elements in $S$. Therefore:

$$\psi = N \times (\sum_i \frac{\beta_i}{N}\psi_i) \ \ \text{ and } \ \ \sum_i \frac{\beta_i}{N}\psi_i \in B$$

Then:

$$\psi \propto \psi'  \text{ for some }  \psi'\in B$$

Which proves that $\mN(B)$ is convex.

\item $ext(\mN(B)) \subset \mN ext(B)$

Consider $\psi \in \mN(B)$. Then, $\exists \psi' \in B \ s.t. \ \psi = \frac{\psi'}{|\psi'|} $. According to Krein-Milman theorem, $\psi' = \sum\limits_{i=1,...,k}\alpha_i\psi'_i$ , $\alpha_i>0$, $\sum_i\alpha_i = 1$ and $\psi_i'\in  ext(B)$. Then:

$$ \psi = \frac{ \sum_i (|\psi'_i|\alpha_i)(\frac{\psi'_i}{|\psi'_i|})}{|\sum_i\alpha_i\psi'_i|} = \sum_ic_i\frac{\psi'_i}{|\psi'_i|}, \text{ with } c_i =  \frac{|\psi'_i|\alpha_i }{|\sum_i\alpha_i\psi'_i|}$$

We can easily see that $c_i > 0 $ and $\sum_ic_i = 1$ and that $\frac{\psi'_i}{|\psi'_i|} \in \mN(ext(B))$.
Also, $\mN(ext(B)) \subset \mN(B)$, since $\forall \psi' \in ext(B), \ \frac{\psi'}{|\psi'|} \in \mN(B)$. Then we have:
$$\mN(B) = C.Hull(\mN(ext(B))) $$
Then, because the extreme points of a convex se S are those that are not convex combinations of any others (except 1 $\times$ themselves), we have:
$$ext(\mN(B)) \subset \mN ext(B)$$

\end{itemize}

\subsection{Storing and computing the following bounds}

The operations with bounds can be resumed by applying the expression $\mN(\sum_A\phi B)$ for a certain set of variables $A$, a factor $\phi$ and a previous bound $B$. The last section showed that it suffices to store the extreme points of previous bounds and compute and store the new bound as a result of $\mN(\sum_A\phi \psi)$ for each $\psi \in ext(B)$.

}

\end{document}